\newtheorem{lemma}{Lemma}
\def\loss{\tilde{\cL}}
\def\peq{\mathrel{\mathop:}=}
\newcommand{\logdet}[1]{\log \lvert #1 \rvert}
\newcommand{\reb}[2]{{\iffalse #1 \fi } {#2}}
\begin{document}

\twocolumn[

\aistatstitle{Stochastic algorithms with descent guarantees for ICA}

\aistatsauthor{Pierre Ablin \And Alexandre Gramfort \And Jean-Fran\c cois Cardoso \And Francis Bach}

\aistatsaddress{ INRIA \\ Université  Paris-Saclay \And INRIA \\ Université Paris-Saclay \And CNRS \\ Institut d'Astrophysique  \\ de Paris \And INRIA \\ \'Ecole Normale Supérieure }]

\begin{abstract}
Independent component analysis (ICA) is a widespread data exploration technique, where observed signals are modeled as linear mixtures of independent components.
From a machine learning point of view, it amounts to a matrix factorization problem with a statistical independence criterion.
Infomax is one of the most used ICA algorithms.
It is based on a loss function which is a non-convex log-likelihood.
We develop a new majorization-minimization framework adapted to this loss function.
We derive an online algorithm for the streaming setting, and an incremental algorithm for the finite sum setting, with the following benefits.
First, unlike most algorithms found in the literature, the proposed methods do not rely on any critical hyper-parameter like a step size, nor do they require a line-search technique.
Second, the algorithm for the finite sum setting, although stochastic, guarantees a decrease of the loss function at each iteration.
Experiments demonstrate progress on the state-of-the-art for large scale datasets, without the necessity for any manual parameter tuning.
\end{abstract}

\section{Introduction}
\label{sec:intro}
Independent component analysis (ICA)~\citep{comon1994independent} is an unsupervised data exploration technique.
In its classical and most popular form, it models a random vector $\bm{x} \in \bbR^{p \times 1}$ as a \emph{linear mixture} of independent sources.
This means that there exists a \emph{source} vector $\bm{s} \in \bbR^{p \times 1}$ of statistically independent features and a \emph{mixing matrix} $A \in \bbR^{p \times p}$, such that $\bm{x} = A \bm{s}$.
The aim of ICA is to recover $A$ from some realizations of $\bm{x}$
without any assumption or constraint on $A$.
Despite being a linear and shallow model,
ICA is widely used in many observational sciences.
Indeed, many physical phenomena are well modeled by ICA.
For example, in neuroscience, the physics driving the measurement process of electrical signals in the brain is linear following Maxwell's equations~\citep{Makeig30091997}.
In astronomy~\citep{morello2015revisiting}, mechanics~\citep{yang2014blind}, neuroscience~\citep{o2017concurrent}, biology~\citep{biton2014independent} and several other fields, ICA algorithms are used daily to process ever-increasing
amounts of data\footnote{Two of the most used ICA algorithms~\citep{bell1995information, hyvarinen1999fast} have been cited over 1500 times in 2017 according to Google Scholar}.
In some data processing pipelines, ICA can be a computational bottleneck for large datasets, calling for more scalable algorithms.
It is thus of importance to develop ICA solvers which are fast, easy to use and with strong convergence guarantees.

One of the first and most employed ICA algorithms is Infomax~\citep{bell1995information}.
The Infomax objective function is equivalent to a likelihood criterion in which each feature of $\bm{s}$ follows a \emph{super-Gaussian} distribution with density $d(\cdot)$
(roughly speaking, a super-Gaussian distribution is heavy-tailed;
a rigorous definition  is given in Section~\ref{sec:surrogate}).
The likelihood of $\bm{x}$ given~$A$ then is~\citep{pham1997blind}:
\begin{equation}
	p(\bm{x} \lvert A)
    =
    \frac{1}{\lvert \det(A) \rvert}
    \prod_{i = 1}^p d([A^{-1} \bm{x}]_{i}).
	\label{eq:likelihood}
\end{equation}
It is more convenient to work with the \emph{unmixing matrix} $W \peq A^{-1}$ and the negative log-likelihood, yielding a cost function $\ell(\bm{x}, W) \peq -\log(p(\bm{x} \lvert W^{-1}))$:
\begin{equation}
	\ell(\bm{x}, W) =
    - \logdet{W}
    - \sum_{i=1}^p \log(d([W\bm{x}]_{i})) \enspace.
\end{equation}
The underlying \emph{expected risk} is then:
\begin{align}\label{eq:true_loss}
	\cL(W)
    &\peq \bbE_{\bm{x}}[\ell(\bm{x}, W)] \\
	&= - \logdet{W}
    - \sum_{i=1}^p \bbE[\log(d([W \bm{x}]_{i}))]
    \enspace. \nonumber
\end{align}
Given a set of $n$ i.i.d. samples of $\bm{x}$, $X = [\bm{x}_1,\cdots, \bm{x}_n]\in \bbR^{p \times n}$, the \emph{empirical risk} reads:
\begin{align}\label{eq:emp_risk}
&\cL_n(W) \peq \frac1n \sum_{j=1}^n \ell(\bm{x}_j, W) \\
&= -\logdet{W}- \frac1n \sum_{i=1}^p\sum_{j=1}^n \log(d([W X]_{ij}))] \enspace. \nonumber
\end{align}
This article focuses on the inference of $W$ in two cases.
The first case is the \emph{finite-sum} setting:
using only $n$ samples, $W$ is found by minimizing $\cL_n$.
The second case is the \emph{online} setting, where a stream of samples arriving one by one is considered.
In this case, $n$ goes to infinity, and then $\cL_n$ tends towards $\cL$.
It is important to note that it is theoretically established~\citep{amari1997stability} and empirically observed that these criteria allow to unmix super-Gaussian sources even if their densities are different from $d$.

Although not formulated like this in the original article, ~\cite{cardoso1997Infomax} shown that Infomax solves the empirical risk minimization problem~\ref{eq:emp_risk}.
It does so by using a stochastic gradient method.
However, $\cL_n$ not being convex, it is hard to find a good step-size policy which fits any kind of data~\citep{bottou2016optimization}.
As a consequence, Infomax can take an extremely long time before it reaches convergence, or even fail to converge at all~\citep{montoya2017caveats}.
Still, the stochasticity of Infomax makes it efficient when the number of samples $n$ is large, because the cost of one iteration does not depend on~$n$.
On the other hand, several full-batch second-order algorithms have been derived for the exact minimization of $\cL_n$.
For instance, in~\citep{zibulevsky2003blind}, an approximation of the Hessian of $\cL_n$ is used to obtain a simple quasi-Newton method.
In~\citep{choi2007relative}, a trust region method is proposed using the same Hessian approximation.
More recently, ~\citet{ablin2017faster} proposed to use the L-BFGS algorithm with the Hessian approximations.
Full-batch methods are robust and sometimes show quadratic convergence speed, but an iteration can take a very long time when the number of samples $n$ is large.
They also crucially rely on a costly line-search strategy because of the non-convexity of the problem.

In this work, we make the following contributions:
\begin{itemize}
\item We introduce a set of surrogate functions for $\ell$, allowing for a majorization-minimization (MM) approach.
We show that this view is equivalent to an EM algorithm for ICA.
Consequently, techniques like incremental EM~\citep{neal1998view} and online EM~\citep{cappe2009line} can be efficiently applied to this problem.

\item Critically, the surrogate functions can be minimized in closed-form with respect to any single row of $W$.
Thus, the incremental algorithm guarantees the decrease of the surrogate loss at each iteration, without having to resort to expensive line-search techniques.
To the best of our knowledge, this feature is a novelty in the field of ICA algorithms.
\item Owing to a cheap partial update, the cost of one iteration of the proposed algorithm is similar to the cost of a stochastic gradient descent step.
Through experiments, the proposed methods are shown to perform better than the state-of-the-art, while enjoying the robust property of guaranteed decrease.
\end{itemize}

\paragraph{Notation.}
In the following, scalar values are noted in lower case (e.g. $y$), vectors in bold font (e.g. $\bm{x}$), and matrices in upper case (e.g. $W$).
For a square matrix $W$,  $\lvert W \rvert$ is the determinant of $W$.
For a matrix $M$, $M_{i:}$ denotes its $i$-th row, and $M_{:j}$ denotes its $j$-th column.
Given a function $u$ from $\bbR$ to $\bbR$ and a matrix $Y \in \bbR^{p \times n}$, $u(Y)$ denotes the matrix of element-wise operations: $\forall i, j, \enspace u(Y)_{ij} = u(Y_{ij})$.
For complexity analysis, we say that a quantity $Q$ is $O(\phi(n, p))$ if $\frac{Q}{\phi(n, p)}$ is bounded.

\section{Representations of super-Gaussian densities}

Super-Gaussian densities can be represented in at least two forms:
either variationally through a surrogate function, or probabilistically through a Gaussian scale mixture~\citep{palmer2006variational}.
These two representations lead to the same optimization algorithms but with a slightly different view point.

\subsection{Surrogate functions}
\label{sec:surrogate}

The density $d$ is assumed symmetric and \emph{super-Gaussian} in the sense that $-\log(d(\sqrt{x}))$ is an increasing concave function over $(0,  +\infty)$.
Following~\citep{palmer2006variational}, there exists a function $f$ such that:
\begin{equation}
	G(y) \peq -\log(d(y)) = \min_{u \geq 0} \frac{u y^2}{2} + f(u),
	\label{eq:conjugate}
\end{equation}
and the minimum is reached for a unique value denoted as $u^*(y)$.
Simple computations show that $u^*(y) = \frac{G'(y)}{y}$.
For $\bm{u} \in \bbR_+^{p \times 1}$, we introduce a new objective function $\tilde{\ell}(\bm{x}, W, \bm{u})$ that reads:
\begin{equation}
	\tilde{\ell}(\bm{x}, W, \bm{u})
	\peq
	- \logdet{W}
    +  \sum_{i=1}^p [\frac{1}{2}u_i\ [W\bm{x}]_{i}^2
    + f(u_i)],
\label{eq:newcost}
\end{equation}
and the associated empirical risk, for $U = [\bm{u}_1,\cdots,\bm{u}_n]\in \bbR_+^{p \times n}$:
\begin{align}\label{eq:newcost_emp}
	&\loss_n(W, U)
    \peq \frac1n \sum_{j=1}^n \tilde{\ell}(\bm{x}_j, W, \bm{u}_j) \\
    &=  - \logdet{W}
    + \frac{1}{n} \sum_{i=1}^p \sum_{j=1}^n [\frac12U_{ij}\ [WX]_{ij}^2
    +f(U_{ij})].
\end{align}
Following Eq.~\eqref{eq:conjugate}, we have:
\begin{lemma}[Majorization]
Let $W \in \bbR^{p \times p}$. For any $U \in \bbR_+^{p \times n}$, $\cL_n(W) \leq \loss_n(W, U)$, with equality if and only if $U = u^*(WX)$.
\label{lemma:majorization}
\end{lemma}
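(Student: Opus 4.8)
The plan is to reduce the matrix-level statement to the scalar identity \eqref{eq:conjugate} applied entrywise. First I would write out both sides explicitly. Comparing \eqref{eq:emp_risk} and \eqref{eq:newcost_emp}, the term $-\logdet{W}$ appears in both and cancels, so the inequality $\cL_n(W) \le \loss_n(W,U)$ is equivalent to
\begin{equation*}
 -\frac1n \sum_{i=1}^p \sum_{j=1}^n \log\bigl(d([WX]_{ij})\bigr)
 \;\le\;
 \frac1n \sum_{i=1}^p \sum_{j=1}^n \Bigl[\tfrac12 U_{ij}\,[WX]_{ij}^2 + f(U_{ij})\Bigr].
\end{equation*}
Since the sums match index for index, it suffices to prove, for every pair $(i,j)$, the scalar inequality $G([WX]_{ij}) \le \tfrac12 U_{ij}\,[WX]_{ij}^2 + f(U_{ij})$, where $G(y) = -\log(d(y))$ as in \eqref{eq:conjugate}.

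Next I would invoke \eqref{eq:conjugate} directly: for any real $y$ and any $u \ge 0$ we have $G(y) = \min_{v\ge 0}\bigl(\tfrac{v y^2}{2} + f(v)\bigr) \le \tfrac{u y^2}{2} + f(u)$, simply because the minimum over $v$ is no larger than the value at $v = u$. Applying this with $y = [WX]_{ij}$ and $u = U_{ij} \ge 0$ gives exactly the required entrywise bound, and summing over $i,j$ and dividing by $n$ yields $\cL_n(W) \le \loss_n(W,U)$. For the equality case, I would again argue entrywise: summing nonnegative gaps $\Delta_{ij} \peq \tfrac12 U_{ij}[WX]_{ij}^2 + f(U_{ij}) - G([WX]_{ij}) \ge 0$ gives zero if and only if every $\Delta_{ij} = 0$. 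By the stated uniqueness of the minimizer in \eqref{eq:conjugate} — the minimum is attained at the unique point $u^*(y)$ — we have $\Delta_{ij} = 0$ if and only if $U_{ij} = u^*([WX]_{ij})$, i.e. $U = u^*(WX)$ with the element-wise convention for $u^*$. This establishes both directions of the "if and only if".

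The argument is essentially bookkeeping once \eqref{eq:conjugate} is in hand, so there is no real obstacle; the only point requiring a little care is the equality case, where one must use that a sum of nonnegative terms vanishes exactly when each term does, combined with the uniqueness clause in the variational representation \eqref{eq:conjugate} to pin down $U$ entrywise. One should also note implicitly that $u^*$ is well defined on all the relevant entries $[WX]_{ij}$, which follows from the super-Gaussian assumption on $d$ guaranteeing that $u^*(y) = G'(y)/y$ is the unique minimizer for every $y$.
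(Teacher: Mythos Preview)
Your proposal is correct and follows essentially the same approach as the paper: reduce the inequality to the entrywise scalar bound $G([WX]_{ij}) \le \tfrac12 U_{ij}[WX]_{ij}^2 + f(U_{ij})$ via \eqref{eq:conjugate}, then sum. Your treatment of the equality case (sum of nonnegative gaps vanishes iff each gap does, then uniqueness of $u^*$) is a bit more explicit than the paper's, but the argument is the same.
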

\begin{lemma}[Same minimizers]
	Let $W \in \bbR^{p \times p}$, and $U = u^*(WX)$. Then, $W$ 		minimizes $\cL_n$ if and only if $(W, U)$ minimizes $\loss_n$.
	\label{lemma:samemin}
\end{lemma}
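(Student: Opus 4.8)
The plan is to deduce the result from Lemma~\ref{lemma:majorization} by a standard majorization-minimization argument, the key ingredient being that for every fixed $W$ the partial minimum of $\loss_n$ over $U$ equals $\cL_n(W)$, attained precisely at the coupled value $U = u^*(WX)$.

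First I would record this partial-minimization identity. By Lemma~\ref{lemma:majorization}, for any fixed $W \in \bbR^{p\times p}$ and any $U \in \bbR_+^{p\times n}$ we have $\loss_n(W,U) \geq \cL_n(W)$, with equality if and only if $U = u^*(WX)$; hence
\begin{equation}
\min_{U \in \bbR_+^{p\times n}} \loss_n(W,U) = \loss_n\big(W, u^*(WX)\big) = \cL_n(W) \enspace.
\end{equation}
Minimizing over $W$ on both sides shows that $\cL_n$ and $\loss_n$ share the same optimal value, and it remains to turn this into the stated correspondence of minimizers.

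Making that precise gives the two implications. For the forward direction, assume $W$ minimizes $\cL_n$ and set $U = u^*(WX)$. For an arbitrary competitor $(W',U')$, the identity above gives $\loss_n(W',U') \geq \cL_n(W') \geq \cL_n(W) = \loss_n(W,U)$, so $(W,U)$ minimizes $\loss_n$. Conversely, assume $(W,U)$ with $U = u^*(WX)$ minimizes $\loss_n$; then for any $W'$, choosing the coupled value $U'' = u^*(W'X)$ yields $\cL_n(W') = \loss_n(W', U'') \geq \loss_n(W,U) = \cL_n(W)$, so $W$ minimizes $\cL_n$.

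I do not expect a real obstacle here; the only point needing care is that the minimization defining $\loss_n$ ranges over all pairs $(W,U)$, including those not of the coupled form $U = u^*(WX)$, so one must use the full inequality of Lemma~\ref{lemma:majorization} rather than only its equality case to dominate those off-manifold competitors. One should also note that the statement concerns actual minimizers, so existence of a minimizer is implicitly assumed; without it the same computation still establishes equality of the infima and the correspondence at the level of minimizing sequences.
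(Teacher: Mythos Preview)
Your argument is correct and is exactly the standard majorization--minimization deduction one expects here. The paper's single proof block actually only establishes the majorization inequality of Lemma~\ref{lemma:majorization} and leaves Lemma~\ref{lemma:samemin} as an implicit consequence; you have simply spelled out that consequence, so your approach coincides with the paper's intended one.
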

\textbf{Proof:} Using the function $G$ introduced in Eq.~\eqref{eq:conjugate}, the loss $\mathcal{L}_n$ writes:
$$
\mathcal{L}_n(W) = - \logdet{W} + \frac1n\sum_{i=1}^p \sum_{j=1}^{n}G([W X]_{ij})
$$
For a given matrix $U \in\mathbb{R}^{p \times n}$, using Eq.~\eqref{eq:conjugate} we have for all $i, j$: $G([W X]_{ij}) \leq \frac12 U_{ij}[WX]_{ij}^2 + f([WX]_{ij}) $, with equality if and only if $U_{ij} = u^*([WX]_{ij})$.
Summing these equations yields as expected:
\begin{align*}
&- \logdet{W} + \frac1n\sum_{i=1}^p \sum_{j=1}^{n}G([W X]_{ij}) \leq
\\
&- \logdet{W} + \frac1n\sum_{i=1}^p \sum_{j=1}^{n}[\frac12 U_{ij}[WX]_{ij}^2 + f([WX]_{ij})]
\end{align*}
with equality if and only if for all $i, j$, $U_{ij} = u^*([WX]_{ij}). \enspace\square$

In line with the majorization-minimization (MM) framework~\citep{mairal2015incremental},
these two lemmas naturally suggest to minimize $\cL_n(W)$ by alternating the minimization of the auxiliary function $\loss_n(W, U)$  with respect to $W$ and $U$.
This will also be shown to be equivalent to the EM algorithm for the Gaussian scale mixture interpretation in the next Section.

The rest of the paper focuses on the minimization of $\loss_n$ rather than $\cL_n$, which yields the same unmixing matrix by Lemma~\ref{lemma:samemin}.

\subsection{EM algorithm with Gaussian scale mixtures}
\label{sec:em}

Super-Gaussian densities can also be represented as scale mixtures of Gaussian densities~\citep{palmer2006variational}, that is,
$d(y) = \int_0^{+\infty}
g(y, \eta)q(\eta) d\eta
$,
where $g(y, \eta) = \frac{1}{\sqrt{2\pi\eta}} \exp( -\frac{y^2}{2\eta})$ is a centered Gaussian density of variance $\eta$,
and $q(\eta)$ a distribution on the variance of the Gaussian distribution.
It turns out that the EM algorithm using the above form for our ICA model is exactly equivalent to the alternating optimization of $\loss_n$ (see a proof in the supplementary material).
The variable $U$ corresponds to the scale parameter in~\citep{palmer2006variational} and the EM algorithm alternates between setting $U$ to the posterior mean $u^*(Y)$ (E-step) and a descent move in $W$ (M-step).

\paragraph{Relationship to the noisy case.}
Many articles (e.g. ~\citep{palmer2006variational,girolami2001variational, bermond1999approximate}) have proposed EM-based techniques for the estimation of the latent parameters of the more general linear model:
\begin{equation}
	\bm{x} = A \bm{s} + \bm{n} \enspace,
	\label{eq:palmer_model}
\end{equation}
where $A$ is the mixing matrix, and $\bm{n} \sim \mathcal{N}(0, \Sigma)$ is a Gaussian variable accounting for noise.
In~\citep{palmer2006variational}, the matrix $A$ is assumed to be known, as well as the noise covariance $\Sigma$.
On the contrary, the present article deals with the case where $A$ is unknown, and where there is no noise.
The noisy case (with unknown $A$) is studied in e.g.~\citep{bermond1999approximate,girolami2001variational}.
An EM algorithm is derived for the estimation of $\bm{s}$, $A$ and $\Sigma$.
In the appendix, it is shown that this EM algorithm makes no progress in the limit of noise-free observations
since the EM update rule for $A$ becomes $A \leftarrow A$
when $\Sigma = 0$.
Hence, the EM algorithms found in the literature for the noisy case suffer considerable slowdown in high signal-to-noise regime.
In contrast, the approach derived in the following section is not affected by this problem.

\subsection{Examples}

Many choices for $G$ can be found in the ICA literature.
In the following, we omit irrelevant normalizing constants.
The original Infomax paper~\citep{bell1995information} implicitly uses $G(y) = \log(\cosh(y))$ since it corresponds to $G'(y) = \tanh(y)$ and $u^*(y)=\frac{\tanh(y)}{y}$.
This density model is one of the most widely used.
However, since an ICA algorithm has to evaluate those functions many times, using simpler functions offers significant speedups.
One possibility is to use a Student distribution: $G(y) = \frac12 \log(1 + y^2)$, for which $ u^*(y) = \frac{1}{1 + y^2}$.
In the following, we choose the Huber function: $G(y) = \frac12 y^2$ if $\lvert y \rvert < 1$ and $G(y) = \lvert y \rvert - \frac12$ if not.
This gives $u^*(y)= 1$ if $\lvert y \rvert < 1$, $u^*(y) =\frac{1}{\lvert y \rvert}$ otherwise.
\section{Stochastic minimization of the loss function}
\label{sec:algo}

Using a MM strategy,  $\loss_n(W, U)$ is minimized by alternating descent moves in $U$ and in $W$.
We propose an incremental technique which minimizes $\loss_n$ with a finite number of samples, and an online technique where each sample is only used once.
The pseudo code for these algorithms is given in Algorithms~\ref{alg:inc} and~\ref{alg:online}.
The difference between incremental and online technique only reflects through the variable $U$ which is estimated at the majorization step.
Hence, we first discuss the  minimization step.

\subsection{Minimization step: Descent in W}
\label{subsec:minw}

Expanding $[WX]_{ij}^2$, the middle term in the new loss function~\eqref{eq:newcost} is quadratic in the rows of $W$:
\begin{equation}
\loss_n = - \logdet{W} + \frac12\sum_{i=1}^p W_{i:} A^i W_{i:}^{\top} + \frac{1}{n} \sum_{i=1}^p \sum_{j=1}^n f(U_{ij}),
\label{eq:quad_form}
\end{equation}
where $W_{i:}$ denotes the $i$-th row of $W$, and the $A^i$'s are $p\times p$ matrices given by:
\begin{equation}
A^i_{kl} \peq \frac1n\sum_{j=1}^n U_{ij} X_{kj} X_{lj} \enspace .
\label{eq:acc}
\end{equation}
Therefore, when $U$ is fixed, with respect to $W$, $\loss_n$ is the sum of the $\log\det$ function and a quadratic term.
The minimization of such a function is difficult, mostly because the $\log\det$ part introduces non-convexity.
However, similarly to a coordinate descent move, it can be \emph{exactly partially} minimized in closed-form:
\begin{lemma}[Exact partial minimization]
Let $i \in [1, p]$, and $\bm{m} \in \bbR^{1 \times p}$ ($\bm{m}$ is a \emph{row} vector).
Consider the mapping $\Theta_i(\bm{m}) : \bbR^{1 \times p} \rightarrow \bbR^{p \times p}$  such that the matrix $\Theta_i(\bm{m})$ is equal to $I_p$, except for its $i$-th row  which is equal to $\bm{m}$.

Let $W\in \bbR^{p \times p}$ and $U \in \bbR^{p \times n}$.
Define $K \peq W A^i W^{\top}\in \bbR^{p \times p}$.
Then,
\begin{equation}
\argmin_{\bm{m} \in \bbR^{1\times p}}  \loss_n(\Theta_i(\bm{m})W, U) = \frac{1}{\sqrt{(K^{-1})_{ii}}}(K^{-1})_{i:} \enspace .
\label{eq:minw}
\end{equation}
\end{lemma}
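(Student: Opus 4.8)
The plan is to show that, viewed as a function of $\bm{m}$ alone, $\loss_n(\Theta_i(\bm{m})W,U)$ equals, up to a constant, a log-barrier in the $i$-th coordinate of $\bm{m}$ plus a positive-definite quadratic form, and then to solve the first-order optimality condition in closed form. First I would make the substitution explicit: since $\Theta_i(\bm{m})$ is $I_p$ with its $i$-th row replaced by $\bm{m}$, the matrix $\Theta_i(\bm{m})W$ is $W$ with its $i$-th row replaced by $\bm{m}W$ and all other rows left unchanged. Inserting this into~\eqref{eq:quad_form}, and using that the matrices $A^k$ and the term $\frac1n\sum_{k,j}f(U_{kj})$ depend on $U$ but not on $W$, every summand not involving the $i$-th row is constant in $\bm{m}$; the $i$-th quadratic summand becomes $\frac12(\bm{m}W)A^i(\bm{m}W)^\top=\frac12\,\bm{m}K\bm{m}^\top$ with $K=WA^iW^\top$ as in the statement. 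For the $\log\det$ term, $\det(\Theta_i(\bm{m})W)=\det(\Theta_i(\bm{m}))\det(W)$, and expanding $\det(\Theta_i(\bm{m}))$ along its $i$-th row gives $\det(\Theta_i(\bm{m}))=m_i$: the diagonal cofactor equals $\det(I_{p-1})=1$, while every cofactor with index $l\neq i$ vanishes because deleting row $i$ and column $l$ leaves the row $e_l^\top$ restricted to a set of columns not containing $l$, i.e. a zero row. Hence, up to a sign convention for the determinant,
\[
\loss_n(\Theta_i(\bm{m})W,U)=-\log\lvert m_i\rvert+\tfrac12\,\bm{m}K\bm{m}^\top+\mathrm{const},
\]
with the constant independent of $\bm{m}$.

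Next I would settle the optimization. Writing $A^i=\frac1n X\,\mathrm{diag}(U_{i:})\,X^\top$ shows $A^i\succeq0$ because $U\ge0$, hence $K=WA^iW^\top\succeq0$; since $K^{-1}$ appears in the claim, $K$ is invertible, so $K\succ0$ and in particular $(K^{-1})_{ii}>0$. The function $\bm{m}\mapsto-\log\lvert m_i\rvert+\frac12\bm{m}K\bm{m}^\top$ is defined on $\{m_i\neq0\}$, is strictly convex on each open half-space $\{m_i>0\}$ and $\{m_i<0\}$, and tends to $+\infty$ as $m_i\to0$ or $\lVert\bm{m}\rVert\to\infty$; therefore a global minimizer exists and is a stationary point. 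Setting the gradient to zero gives $\bm{m}K=\frac1{m_i}e_i^\top$, i.e. $\bm{m}=\frac1{m_i}(K^{-1})_{i:}$ (using symmetry of $K^{-1}$); reading off the $i$-th coordinate yields $m_i^2=(K^{-1})_{ii}$, hence $m_i=\pm\sqrt{(K^{-1})_{ii}}$ and $\bm{m}=\pm\frac1{\sqrt{(K^{-1})_{ii}}}(K^{-1})_{i:}$. Both signs give the same value (the objective is even in $\bm{m}$, which also reflects that flipping the sign of a row of the unmixing matrix leaves $\loss_n$ unchanged since $d$ is symmetric), so selecting the positive sign gives exactly~\eqref{eq:minw}.

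The computation is elementary throughout; the two places needing a little care are the determinant identity $\det(\Theta_i(\bm{m}))=m_i$ and the fact that the feasible set $\{m_i\neq0\}$ is disconnected, so one must argue strict convexity and coercivity separately on each half-space rather than invoke a global convexity argument — and, strictly speaking, acknowledge that the argmin is the pair $\pm\frac1{\sqrt{(K^{-1})_{ii}}}(K^{-1})_{i:}$, of which the right-hand side of~\eqref{eq:minw} is one representative.
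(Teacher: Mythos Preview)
Your proposal is correct and follows essentially the same approach as the paper: reduce $\loss_n(\Theta_i(\bm{m})W,U)$ to $-\log\lvert m_i\rvert+\tfrac12\bm{m}K\bm{m}^\top+\mathrm{const}$, argue (strong) convexity on a half-space, and solve the first-order condition $\bm{m}K=\tfrac{1}{m_i}\bm{e}_i^\top$ to obtain $m_i^2=(K^{-1})_{ii}$. Your write-up is more careful than the paper's on points the paper leaves implicit (the cofactor computation $\det(\Theta_i(\bm{m}))=m_i$, positive definiteness of $K$, and the sign ambiguity/disconnected domain), but the route is the same.
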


\textbf{Proof:}
With respect to $\bm{m}$, $\loss_n(\Theta_i(\bm{m})W, U)$ is of the form $\phi(\bm{m})=-\log(\lvert m_i \rvert) + \bm{m}K\bm{m}^{\top}$.
Restraining to the region $m_i > 0$, this function is strongly convex and smooth, and thus possesses a single minimum found by cancelling the gradient.
Simple algebra shows :
$$
\nabla \phi(\bm{m}) = - \frac{1}{m_i}\bm{e}^i + \bm{m}K \enspace,
$$
where $\bm{e}^i$ is the $i$-th canonical basis vector.
Cancelling the gradient yields $ \bm{m} = \frac{1}{m_i} (K^{-1})_{i:}$, and inspection of the $i$-th coordinate of this relationship gives $m_i = \frac{(K^{-1})_{ii}}{m_i}$, providing the expected result. $\square$

In other words, we can exactly minimize the loss with a \emph{multiplicative update} of one of its rows.
Performing multiplicative updates on the iterate $W$ enforces the \emph{equivariance} of the proposed methods~\citep{cardoso1996equivariant}:
denoting by $\mathcal{A}$ the ``algorithm operator'' which maps input signals $X$ (be it a stream or a finite set) to the estimated mixing matrix, for any invertible matrix $B$, $\mathcal{A}(BX) =  B\mathcal{A}(X)$.

\subsection{Majorization step : Descent in U}

For a fixed unmixing matrix $W$, Lemma~\ref{lemma:majorization} gives: $\argmin_{U} \loss_n(W, U) = u^*(WX)$.
Such an operation works on the full batch of samples $X$.
When only one sample $X_{:j}=\bm{x}_j \in \bbR^{p \times 1}$ is available, the operation $U_{:j} \leftarrow u^*(W \bm{x}_j)$  minimizes $\loss_n(W, U)$ with respect to the $j$-th column of $U$.
As seen previously (Section~\ref{subsec:minw}), we only need to compute the $A^i$'s to perform a descent in $W$, hence one needs a way to accumulate those matrices.

\textbf{Incremental algorithm.}
To do so in an incremental way~\citep{neal1998view}, a memory $U^{\text{mem}} \in\mathbb{R}^{p\times n}$ stores the values of~$U$.
When a sample $\bm{x}_j$ %
is seen by the algorithm, we compute $U^{\text{new}}_{:j} = u^*(W \bm{x}_j)$, and update the $A^i$'s as:
\begin{equation}
A^i \leftarrow A^i + \frac1T (U^{\text{new}}_{ij} - U^{\text{mem}}_{ij}) \bm{x}_j\bm{x}_j^{\top} \enspace.
\label{eq:updateamem}
\end{equation}
The memory is then updated by $U^{\text{mem}}_{:j} \leftarrow U^{\text{new}}_{:j}$ enforcing $A^i = \frac1n\sum_{j=1}^n U^{\text{mem}}_{ij} \bm{x}_j\bm{x}_j^{\top}$ at each iteration.

\textbf{Online algorithm.}
When each sample is only seen once, there is no memory, and a natural update rule following~\citep{cappe2009line} is:
\begin{equation}
A^i \leftarrow (1 - \rho(n)) A^i + \rho(n) U_{ij} \bm{x}_j\bm{x}_j^{\top} \enspace,
\label{eq:updateastream}
\end{equation}
where $n$ is the number of samples seen, and $\rho(n)\in[0, 1]$ is a well chosen factor.
Setting $\rho(n) = \frac1n$ yields the unbiased formula $A^i(n) = \frac1n \sum_{j=1}^n U_{ij} \bm{x}_j\bm{x}_j^{\top}$.
A more aggressive policy $\rho(n) = \frac{1}{n^\alpha}$ for  $\alpha \in [\frac12, 1)$ empirically leads to faster estimation of the latent parameters. Note that since we are averaging sufficient statistics, there is no need to multiply $\rho(n)$ by a  constant.
\subsection{Complexity analysis}

\textbf{Memory:}
The proposed algorithm stores $p$ matrices $A^i$, which requires a memory of size $ \frac{p^2(p+1)}{2}$ (since they are symmetric).
In the incremental case, it stores the real numbers $U_{ij}$, requiring a memory of size $p \times n$.
In most practical cases of ICA, the number of sources $p$ is very small compared to $n$ ($n \gg p$), meaning that the dominating memory cost is $p \times n$.
In the online case, the algorithm only loads one mini-batch of data at a time, leading to a reduced memory size of $p \times n_b$, where $n_b$ is the mini-batch size.

\textbf{Time:}
The majorization step requires to update each coefficient of the matrices $A^i$'s, meaning that it has a time complexity of $p^3 \times n_b$.
The minimization step requires to solve $p$ linear systems to obtain the matrices $K_{i:}^{-1}$.
Each one takes $O(p^3)$.
An improvement based on preconditioned conjugate gradient method~\citep{shewchuk1994introduction} is proposed in the appendix to reduce the computational cost.
The total cost of the minimization step is thus $O(p^4)$.
In practice, $p \ll n_b$, so the overall cost of one iteration is dominated by the majorization step, and is $ \frac{p^2(p+1)}{2} \times n_b$.
A stochastic gradient descent algorithm with the same mini-batch size $n_b$, as described later in Section~\ref{sec:expe}, has a lower time complexity of $p^2 \times n_b$.
We now propose a way to reach the same time complexity with the MM approach.

\subsection{Gap-based greedy update}
\label{sec:dual}
In order to reduce the complexity by one order of magnitude in the majorization step, only a subset of fixed size $ q < p$ of the matrices $A^i$ is updated for each sample.
Following Eq.~\eqref{eq:conjugate},  it is given by what we call \emph{gap} : a positive quantity measuring the decrease in $\loss_n$ provided by updating $U_{ij}$.
In the following, define $\tilde{U}_{i'j'} \peq U^{\text{mem}}_{i'j'}$ if $(i', j') \neq (i, j)$, and $\tilde{U}_{ij} \peq U^{\text{new}}_{ij} = u^*([WX]_{ij})$. The gap is given by:
\begin{align}\label{eq:dualgap}
\text{gap}(W, U^{\text{mem}}_{ij})
\peq \loss_n(W, U^{\text{mem}}) - \loss_n(W, \tilde{U}) \\
=
\frac{1}{2}U^{\text{mem}}_{ij}\ [WX]_{ij}^2
+ f(U^{\text{mem}}_{ij}) - G([WX]_{ij})\enspace .
\end{align}

Since all the above quantities are computed during one iteration anyway, computing the gap for each signal $i \in [1, p]$ only adds a negligible computational overhead, which scales linearly with~$p$.
Then, in a greedy fashion, only the coefficients $U_{ij}$ corresponding to the $q$ largest gaps are updated, yielding the largest decrease in $\loss_n$ possible with $q$ updates.
In the experiments (Figure~\ref{fig:greed}), we observe that it is much faster than a random selection, and that it does not impair convergence too much compared to the full-selection ($q=p$).
In the online setting, there is no memory, so we simply choose $q$ indices among $p$ at random.

\textbf{Related work:}
The matrices $A^i$ are \emph{sufficient statistics} of the surrogate ICA model for a given value of $U$.
The idea to perform a coordinate descent move~\eqref{eq:minw} after each update of the sufficient statistics is inspired by online dictionary learning~\citep{mairal2009online}\reb{}{, Gaussian graphical models~\citep{honorio2012variable}} and non-negative matrix factorization~\citep{lefevre2011online}.

{\fontsize{4}{4}\selectfont
\begin{algorithm}[t]
\SetKwInOut{Input}{Input}
\SetKwInOut{Init}{Init}
\SetKwInOut{Parameter}{Param}
\caption{Incremental MM algorithm for ICA}
\Input{Samples $X \in \bbR^{p \times n}$}
\Parameter{Number of iterations $t_{\text{max}}$, mini-batch size $n_b$, number of coordinates to update per sample $q$}
\Init{Initialize $W = I_p$, $U^{\text{mem}}= 0 \in \bbR^{n \times p}$ and $A^i= 0 \in \bbR^{p \times p},
    \enspace \forall i \in [1, p]$}
\For{$t = 1, \dots, t_{\text{max}}$}
  {
    Select a mini-batch $b$ of size $n_b$ at random\\
    \For(\tcp*[f]{Majorization}){each index $j\in b$}{
    	Select $\bm{x} = X_{:j}$\\
    	Compute $\bm{u}^\text{new} = u^*(W\bm{x})$\\
    	Compute the gaps~\eqref{eq:dualgap}\\
    	Find the $q$ sources $i_1, \dots, i_q$ corresponding to the largest gaps\\
      Update $A^i$ for $i=i_1,\dots, i_q$ using Eq.~\eqref{eq:updateamem}\\
      Update the memory: $U^{\text{mem}}_{:j} = \bm{u}^{\text{new}}$
    }
    \For(\tcp*[f]{Minimization}){$i=1,\dots,p$}{
    	Update the $i$-th row of $W$ using Eq.~\eqref{eq:minw}
    }
  }
\Return{W}
\label{alg:inc}
\end{algorithm}
}

{\fontsize{4}{4}\selectfont
\begin{algorithm}[t]
\DontPrintSemicolon
\SetKwInOut{Input}{Input}
\SetKwInOut{Init}{Init}
\SetKwInOut{Parameter}{Param}
\caption{Online MM algorithm for ICA}
\Input{A stream of samples $X$ in dimension $\bbR^p$}
\Parameter{Number of iterations $t_{\text{max}}$, mini-batch size $n_b$, number of coordinates to update per sample $q$}
\Init{Initialize $W = I_p$ and $A^i= 0 \in \bbR^{p \times p}, \enspace \forall i \in [1, p]$}

  \For{$t = 1, \dots, t_{\text{max}}$}{
    Fetch $n_b$ samples from the stream \\
    \For(\tcp*[f]{Majorization}){each fetched sample $\bm{x}$}{
    	Compute $\bm{u} = u^*(W\bm{x})$ \\
    	Compute $q$ indices  $i_1, \dots, i_q$  at random \\
      Update $A^i$ for $i=i_1,\dots,i_q$ using Eq.~\eqref{eq:updateastream}
    }
    \For(\tcp*[f]{Minimization}){$i=1\dots,p$}{
    	Update the $i$-th row of $W$ using Eq.~\eqref{eq:minw}
    }
  }
\Return{W}
\label{alg:online}
\end{algorithm}
}

\section{Experiments}
\label{sec:expe}

In this section, we compare the proposed approach to other classical methods to minimize $\cL$.
The code for the proposed methods is available online at \url{https://github.com/pierreablin/mmica}.
\subsection{Compared algorithms}
\textbf{Stochastic gradient descent (SGD).}
Given a mini-batch $b$ containing $n_b$ samples, the relative gradient $\nabla(\cL_n)_{ik} = \frac{1}{n_b} \sum_{j\in b} G'([WX]_{ij})[WX]_{kj}$ is computed.
Then, a descent move $W \leftarrow (I_n - \rho \nabla(\cL_n)) W$ is performed.
The choice of the step size $\rho$ is critical and difficult.
The original article uses a constant step size, but more sophisticated heuristics can be derived.
This method can be used both for the finite sum and the online problem.
It is important to note that once $WX$ and $G'(WX)$ are computed, it needs twice as many elementary operations to compute the gradient as it takes to update one matrix $A^i$ (Eq.~\eqref{eq:updateamem} and Eq.\eqref{eq:updateastream}) when $n_b \gg p$.
The first computation requires $n_b \times p^2$ operations, while the second takes $ n_b \times \frac{p(p+1)}{2}$ (since the matrices $A^i$ are symmetric).
When $n_b$ is large enough, as it is the case in practice in the experiments, these computations are the bottlenecks of their respective methods.
Hence, we take $q=2$ in the experiments for the MM algorithms, so that the theoretical cost of one iteration of the proposed method matches that of SGD.

\textbf{Variance reduced methods.}
One of the drawbacks of the stochastic gradient method is its sub-linear rate of convergence, which happens because the stochastic gradient is a very noisy estimate of the true gradient.
Variance reduced methods such as SAG~\citep{schmidt2017minimizing}, SAGA~\citep{defazio2014saga} or SVRG~\citep{johnson2013accelerating} reduce the variance of the estimated gradient, leading to better rates of convergence.
However, these methods do not solve the other problem of SGD for ICA, which is the difficulty of finding a good step-size policy.
We compare our approach to SAG, which keeps the past stochastic gradients in memory and performs a descent step in the averaged direction.
This approach is however only relevant in the finite-sum setting.

\textbf{Full batch second order algorithms.}
We compare our approach to the ``Fast-Relative Newton'' method (FR-Newton)~\citep{zibulevsky2003blind} \reb{}{and the ``Preconditioned ICA for Real Data'' algorithm (Picard)~\citep{ablin2017faster}.}
The former performs quasi-Newton steps using a simple approximation of the Hessian of $\cL_n$, which is as costly to compute as a gradient.
\reb{}{The later refines the approximation by using it as a preconditioner for the L-BFGS algorithm. For both algorithms, }
one iteration requires to compute the gradient and the Hessian on the full dataset, resulting in a cost of $2 \times p^2 \times n$, and to evaluate the gradient and loss function for each point tested during the line search, so the overall cost is $(2 + n_{\text{ls}}) \times p^2 \times n$ where $n_{\text{ls}} \geq 1$ is the number of points tested during the line-search.
Thus, one epoch requires more than $3$ times more computations than one of SGD or of the proposed algorithms.
These algorithms cannot be used online.

\textbf{Full batch MM.}
For the finite-sum problem, we also compare our approach to the full-batch MM, where the whole $U$ is updated at the majorization step.

\reb{}{
\textbf{FastICA.}
FastICA~\citep{hyvarinen1999fast} is a full batch fixed point algorithm for ICA.
It \emph{does not solve the same optimization problem} as the one presented in this paper (it does not minimize $\mathcal{L}_n$, see~\citep{hyvarinen1999fixed}).
Hence, we do not include metrics involving $\mathcal{L}_n$ to benchmark it.
However, it is one of the most widely used algorithms for ICA in practical applications, and is popular for its fast estimation speed.
Furthermore, it is shown to have similar convergence properties as FR-Newton~\citep{ablin2018faster}
}
\subsection{Performance measures}

The following quality measures are used to assess the performance of the different algorithms:

\textbf{Loss on left-out data:} It is the value of the loss on some data coming from the same dataset but that have not been used to train the algorithms.
This measure, which boils down to the likelihood of left-out data, is similar to the \emph{testing error} in machine learning, and can be computed in both the streaming and finite-sum settings.

\textbf{Amari distance~\citep{moreau1998self}:} When the true mixing matrix $A$ is available,  for a matrix $W$, the product $R=WA$ is computed, and the \emph{Amari distance} is given by:
$$
\sum_{i=1}^p \left(\sum_{j=1}^p \frac{R_{ij}^2}{\max_l R_{il}^2} -1\right) + \sum_{i=1}^p \left(\sum_{j=1}^p \frac{R_{ji}^2}{\max_l R_{lj}^2} -1\right) .
$$
This distance measures the proximity of $W$ and $A^{-1}$ up to scale and permutation indetermination.
It cancels if and only if $R$ is a scale and permutation matrix, i.e., if the separation is perfect.
This measure is relevant both for the online and finite-sum problems.
\reb{}{It is the only metric for which it makes sense to compare FastICA to the other algorithms since it does not involve the loss function.}

\textbf{Relative gradient norm:} The norm of the full-batch relative gradient  of $\cL_n$ is another measure of convergence.
Since the problem is non-convex, the algorithms may converge to different local minima, which is why we favor this metric over the \emph{train error}.
It is however only relevant in the finite-sum setting.
In this setting, a converging algorithm should drive the norm of the full-batch relative gradient to zero.

\subsection{Parameters and initialization}
The stochastic algorithms (SGD, SAG, and the proposed MM techniques) are used with a batch size of $n_b = 1000$.
The proposed MM algorithms are run with a parameter $q=2$, which ensures that each of their iterations is equivalent to one iteration of the SGD algorithm.
In the online setting, we use a power $\alpha = 0.5$ to speed up the estimation.
The step-sizes of SGD and SAG are chosen by trial and error on each dataset, by finding a compromise between convergence speed and accuracy of the final mixing matrix.
In the online case, the learning rate is chosen as $\lambda \times n^{-0.5}$ for SGD.
FR-Newton and Picard are run with its default parameters.

Regarding initialization, it is common to initialize an ICA algorithm with an approximate \emph{whitening} matrix.
A whitening matrix $W$ is such that the signals $W\bm{x}$ are decorrelated.
It is interesting to start from such a point in ICA because decorrelation is a necessary condition for independence.
Denoting $C_{\bm{x}}$ the correlation matrix of the signals, the whitening condition writes $WC_{\bm{x}}W^{\top} = I_p$.
Hence, the whitening matrices are the $W=RC_{\bm{x}}^{-\frac12}$ where $R$ is a rotation ($R^{\top}R = I_p$).
In practice, we take $R=I_p$.
The covariance matrix needs to be estimated.
In the case of a fixed dataset $X \in \bbR^{p\times n}$, we can use the empirical covariance $\tilde{C}_X = \frac1nXX^{\top}$ as an approximation.
However, the cost of such a computation, $O(p^2 \times n)$, gets prohibitively large as $n$ grows.
Since the whitening is only an initialization, it needs not be perfectly accurate.
Hence, in practice, we compute the empirical covariance on a sub-sampled version of $X$ of size $n=10^4$.
The same goes for the online algorithm: we fetch the first $10^4$ samples to compute the initial approximate whitening matrix.

\subsection{Datasets}
\label{sec:sim}
\textbf{Synthetic datasets:}
For this experiment, we generate a matrix $S \in \bbR^{p\times n}$ with $p = 10$ and $n=10^6$ of independent sources following a super-Gaussian Laplace distribution: $d(x) = \frac12 \exp(-\lvert x\rvert)$.
Note that this distribution does not match the Huber function used in the algorithms, but estimation is still possible since the sources are super-Gaussian.
Then, we generate a random mixing matrix $A \in \bbR^{p\times p}$ of normally distributed coefficients.
The algorithms discussed above are then run on $X = AS$, and the sequence of iterates produced is recorded.
Finally, the different quality measures are computed on those iterates.
We repeat this process $100$ times with different random realizations, in order to increase the robustness of the conclusions.
The averaged quality measures are displayed in Fig.~\ref{fig:res_synth}.
In order to compare different random initializations, the loss evaluated on left-out data is always shifted so that its plateau is at $0$.

To observe the effect of the greedy gap selection, we generate another dataset in the same way with $p=30$, $n=10^5$. Results are displayed in Fig.~\ref{fig:greed}.
\begin{figure}
  \begin{minipage}{\columnwidth}
    \centering
    \includegraphics[width=\columnwidth]{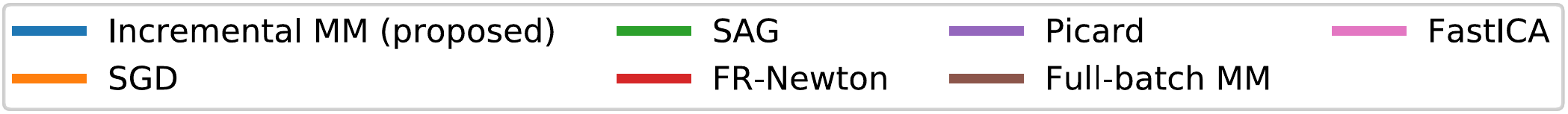}
  \end{minipage}
  \begin{minipage}{\columnwidth}
    \includegraphics[width=0.32\columnwidth]{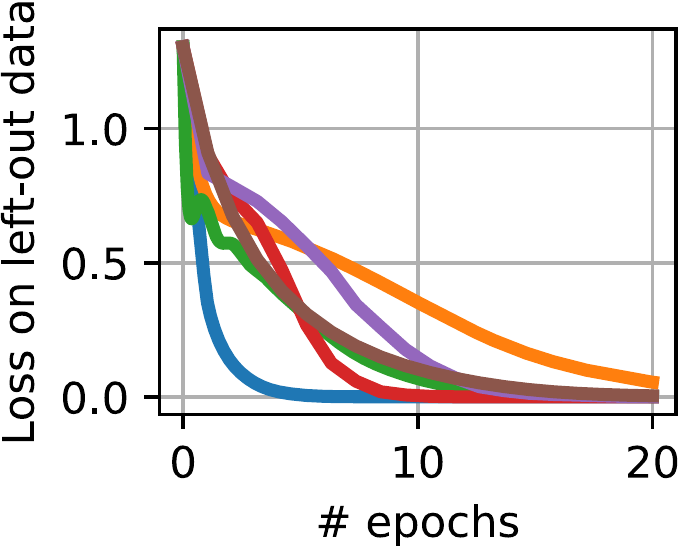}
    \includegraphics[width=0.32\columnwidth]{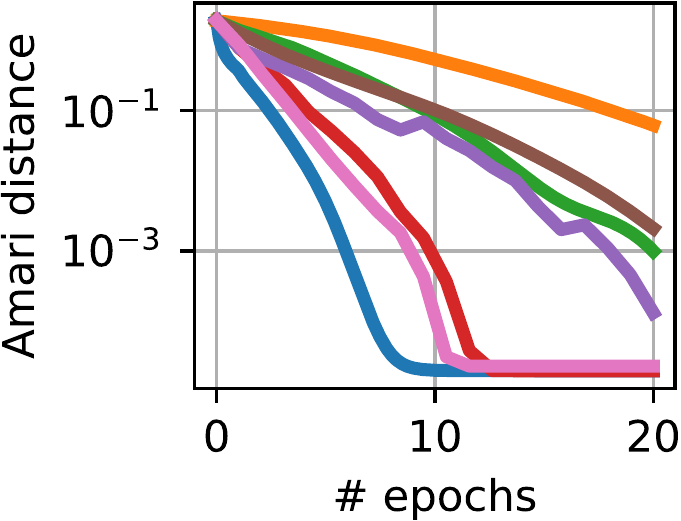}
    \includegraphics[width=0.32\columnwidth]{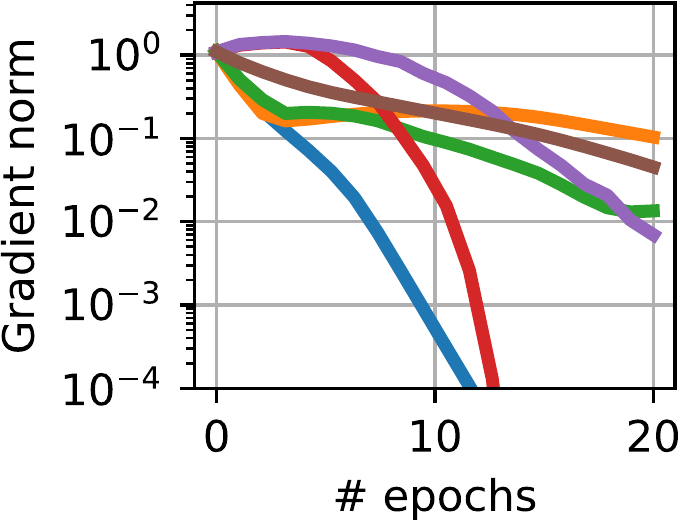}
  \end{minipage}
  \makebox[\columnwidth][c]{
  \begin{minipage}{0.65\linewidth}
    \begin{minipage}{\linewidth}
      \centering
      \includegraphics[width=0.7\linewidth]{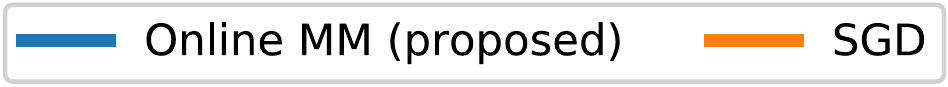}
    \end{minipage}
    \begin{minipage}{\linewidth}
      \centering
      \includegraphics[width=0.49\linewidth]{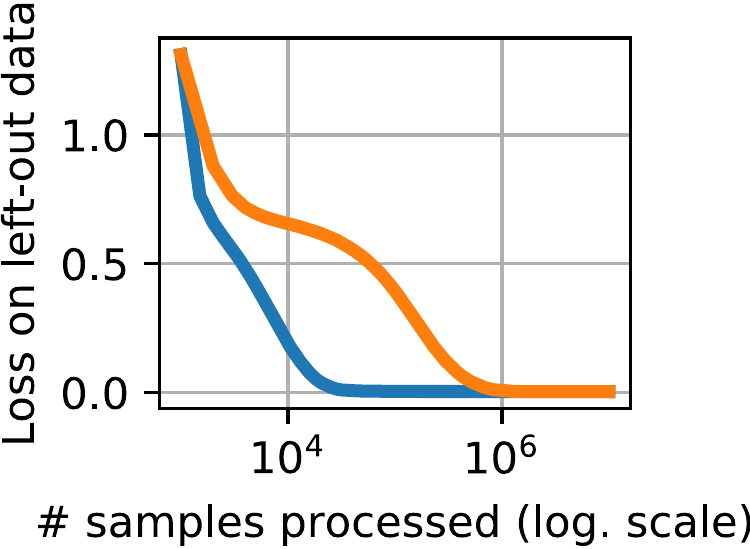}
      \includegraphics[width=0.49\linewidth]{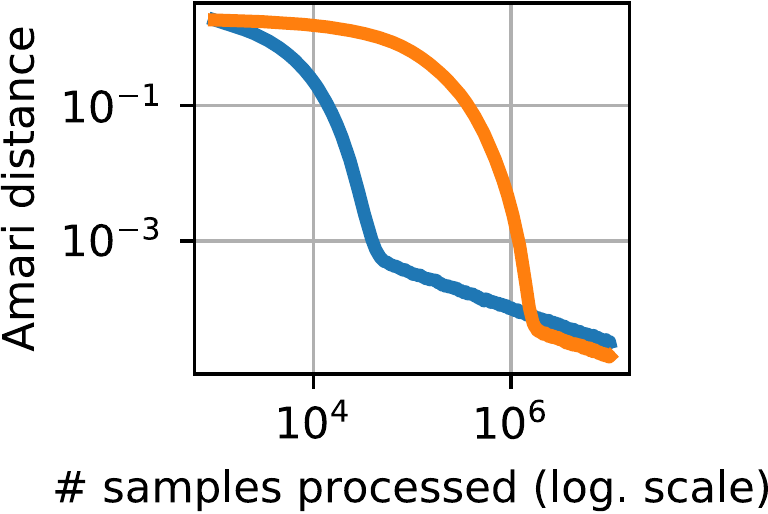}
    \end{minipage}
  \end{minipage}
  }
  \hfill

  \caption{Results on synthetic data. Top: finite-sum problem. $100$ datasets of size $	n = 10^6$ and $p= 10$ are generated, each algorithm performs 20 epochs (passes on the dataset).
     Bottom: online problem. $100$ datasets of size $n = 10^7$ and $p= 10$ are generated, each algorithm performs one pass on each dataset. Metrics are displayed with respect to epochs/number of passes.}
\label{fig:res_synth}
\end{figure}

\textbf{Real datasets:}
The algorithms are applied on classical ICA datasets, covering a wide range of dimensions $p$.
The first experiment is in the spirit of~\citep{hoyer2000independent}.

We extract a big 32GB dataset of $n=4 \times 10^7$ square patches of size $10 \times 10$ from natural images.
Each patch is vectorized into an array of dimension $p=100$.
Only the online algorithms are used to process this dataset since it does not fit into RAM.
The results on this dataset are displayed in Fig.~\ref{fig:res_huge}.
\begin{figure}
\includegraphics[width=0.6\linewidth]{online_synth_legend.pdf}
\centering
\includegraphics[width=0.8\columnwidth]{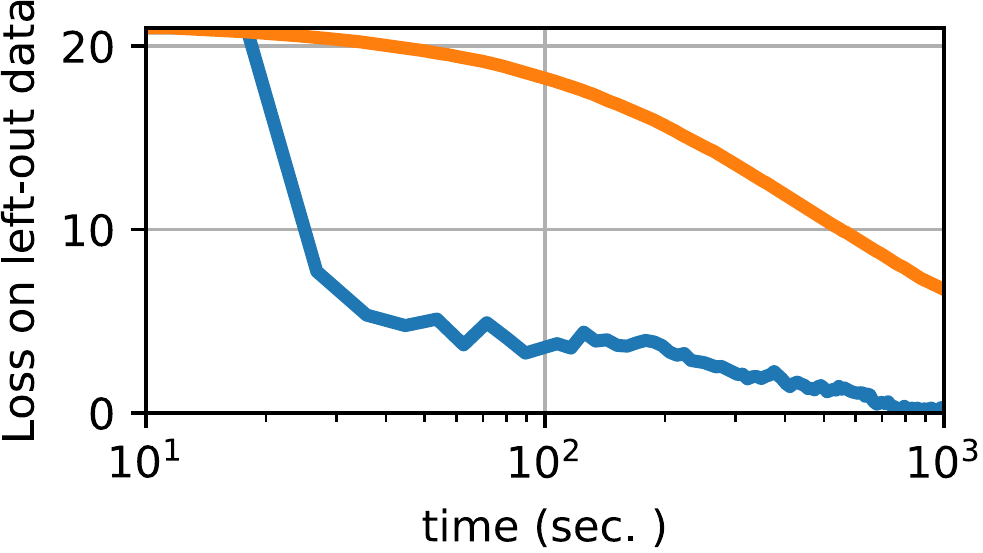}

\caption{Online algorithms applied on a 32 GB real dataset with $p=100$ and $n=4\times 10^7$. Time is in logarithmic scale. Values of the loss on left out data greater than its initial value are truncated.}
\label{fig:res_huge}
\end{figure}

We also generate smaller datasets in the same fashion, of size $n=10^6$, and $10 \times 10$ patches.
The dimension is reduced to $p=10$ using PCA.

Finally, an openly available EEG dataset~\citep{delorme2012independent} of dimension $p=71$, $n=10^6$ is used without dimension reduction.
Each signal matrix is multiplied by a $p \times p$ random matrix.
The different algorithms are applied on these datasets with 10 different random initializations, and for 50 epochs in the finite sum setting.
Results are displayed in Fig.~\ref{fig:res_real}.

\begin{figure}
  \begin{minipage}{\linewidth}
      \centering
      \includegraphics[width=0.63\linewidth]{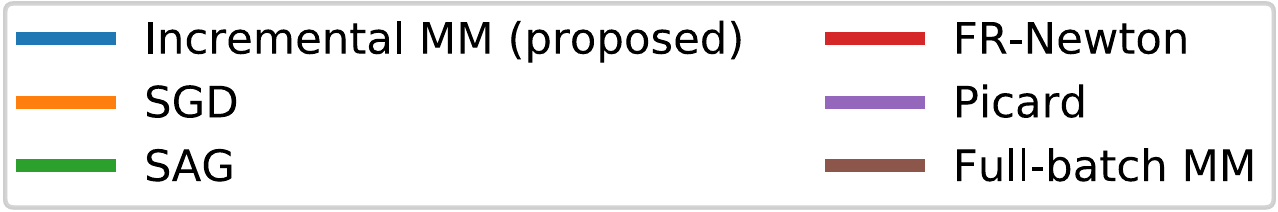}
      \includegraphics[width=0.35\linewidth]{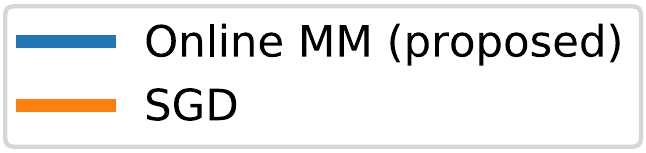}
  \end{minipage}
  \begin{minipage}{\linewidth}
      \centering      \includegraphics[width=0.32\columnwidth]{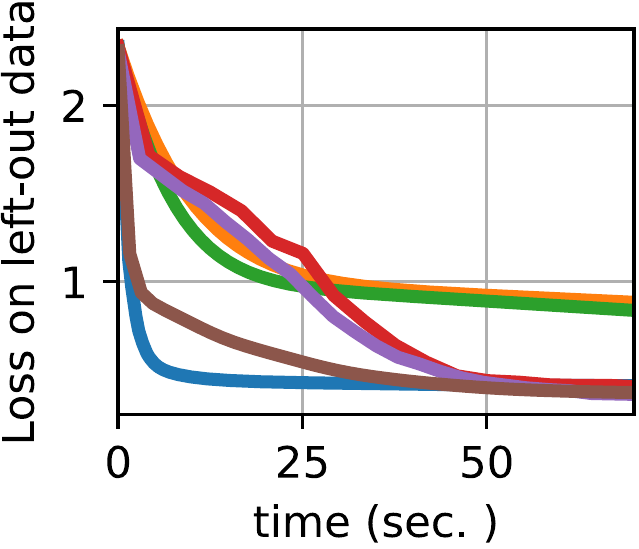}
      \includegraphics[width=0.32\columnwidth]{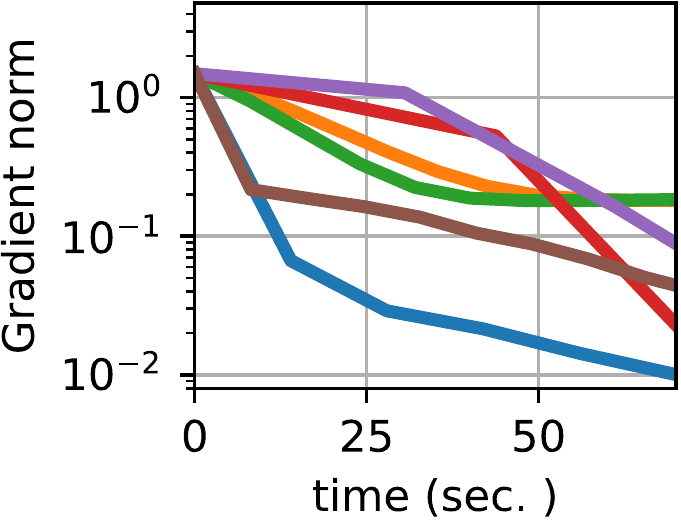}
      \includegraphics[width=0.32\columnwidth]{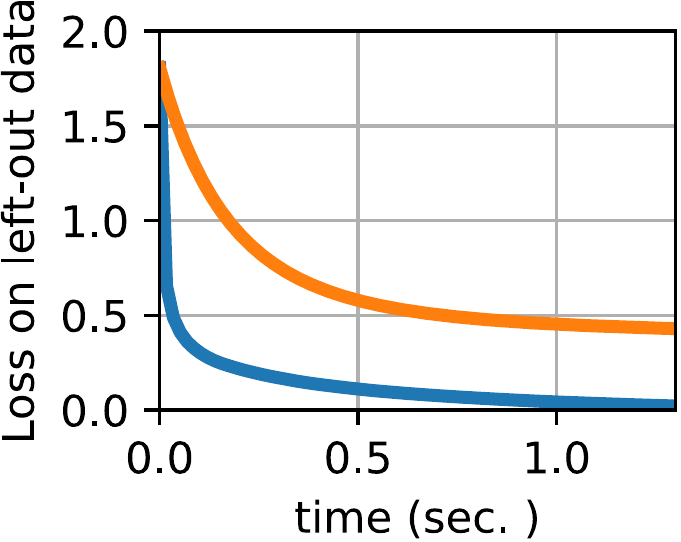}
  \end{minipage}
  \begin{minipage}{\linewidth}
      \centering
      \includegraphics[width=0.32\columnwidth]{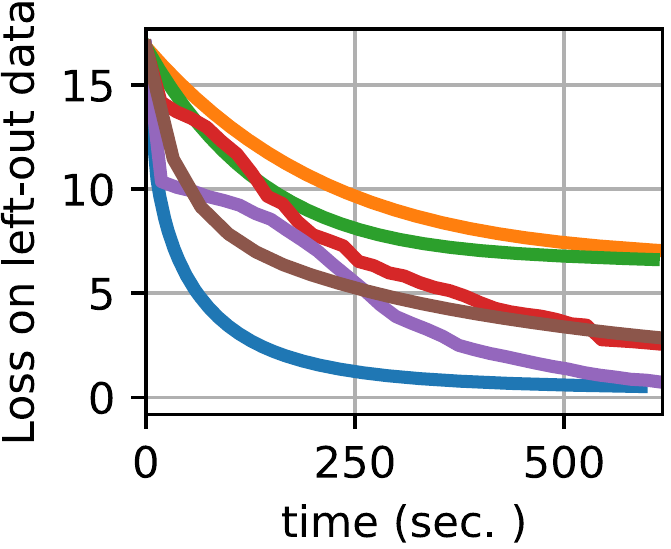}
      \includegraphics[width=0.32\columnwidth]{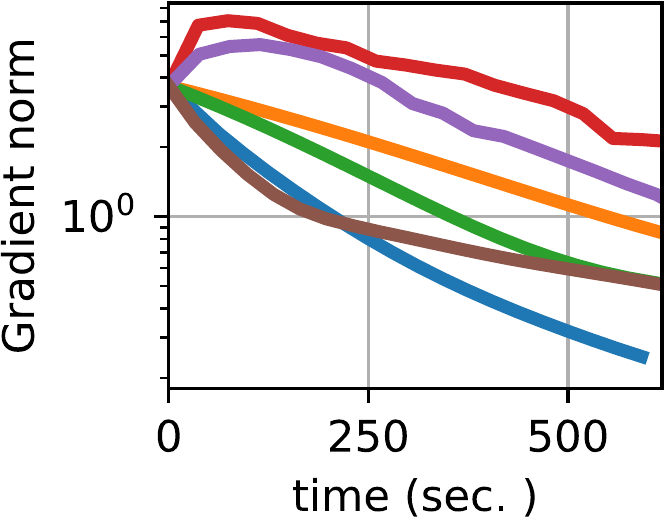}
      \includegraphics[width=0.32\columnwidth]{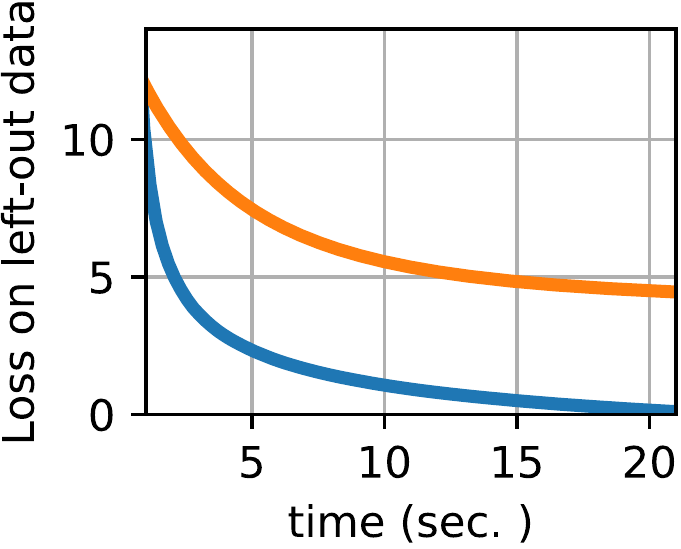}
  \end{minipage}

\caption{Behavior of different algorithms on real data. Top: 15 image patch datasets of size $10 \times 10^6$ are generated, and the averaged results are displayed. Bottom: same with 15 EEG datasets of size $71 \times 10^6$. Left and middle: finite sum problem. Right: online problem. Metrics are displayed with respect to time.}
\label{fig:res_real}
\end{figure}

\subsection{Discussion}

Experiments run on both synthetic and real data of various dimensions demonstrate that the proposed methods consistently perform best when quantifying the loss on left-out data (test error).
This metric is arguably the most important from a statistical machine learning standpoint.
This is also validated by the Amari distance in the simulated case: the proposed method shows similar convergence as FR-Newton and FastICA, and outperforms other algorithms.
Regarding the gradient norm metric (similar to training error), in the simulated and image patch experiment, the proposed algorithm is in the end slower than FR-Newton.
This behavior is expected: the incremental algorithm has a linear convergence, while second order methods are quadratic algorithm.

However, FR-Newton catches up with the proposed algorithm well after the testing error plateaus, so when the error of the model is dominated by the \emph{estimation error}~\citep{bottou2008tradeoffs} rather than the optimization error.

\textbf{Effect of the greedy update rule:}
On the $30 \times 10^5$ dataset (Fig.~\ref{fig:greed}), we run the incremental algorithm with the greedy coordinate update rule discussed in Sec.~\ref{sec:dual} with $q=1$ and $q=3$.
We compare it to a random approach (where $q$ random sources are updated at each iteration) for the same values of $q$, and to the more costly full-selection algorithm, where each source is updated for each sample.
The greedy approach only adds a negligible computational overhead linear in $p$ compared to the random approach, while leading to much faster estimation.
In terms of generalization error, it is only slightly outperformed by the full selection approach ($q=p$).

\begin{figure}
  \begin{minipage}{\linewidth}
      \centering
      \includegraphics[width=\linewidth]{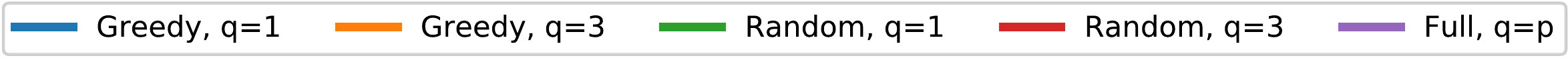}
  \end{minipage}
  \begin{minipage}{\linewidth}
      \centering
      \includegraphics[width=0.32\columnwidth]{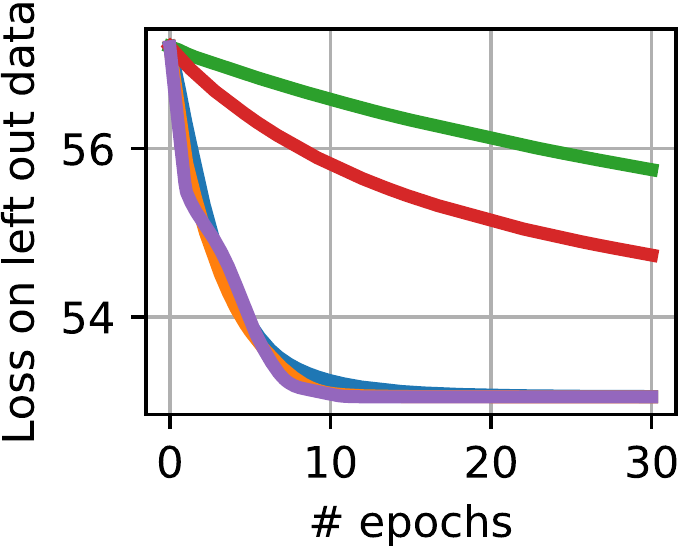}
      \includegraphics[width=0.32\columnwidth]{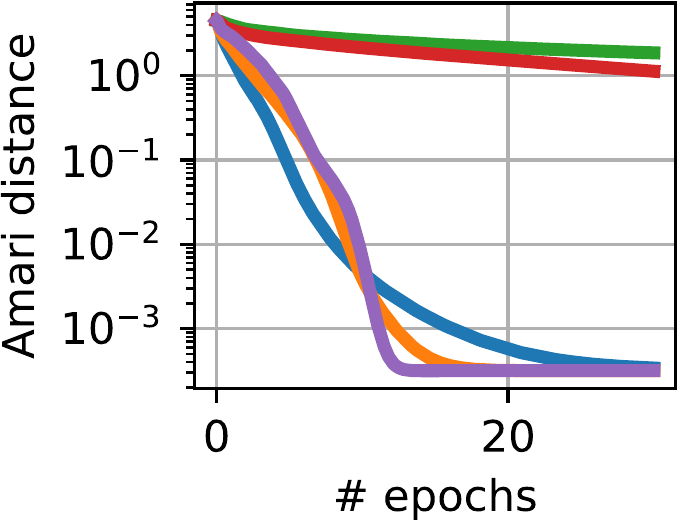}
      \includegraphics[width=0.32\columnwidth]{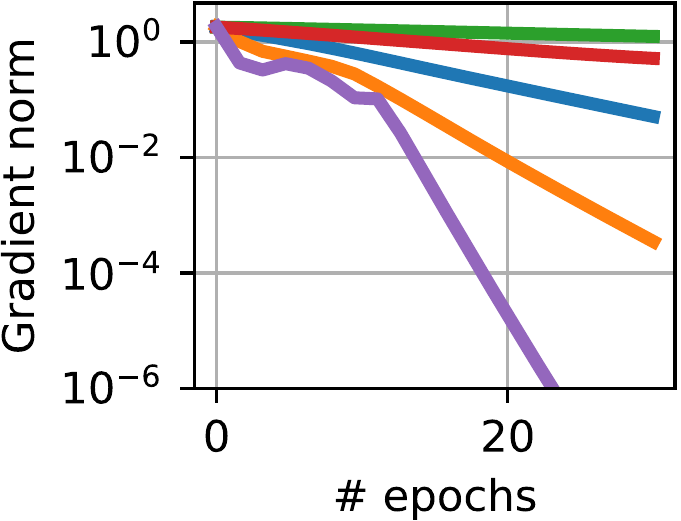}
  \end{minipage}

\caption{Effect of the greedy update rule, on a synthetic problem of size $p=30$, $n=10^5$. For a similar complexity, the greedy approach gives much faster convergence than the random approach.}
\label{fig:greed}
\end{figure}

\section{Conclusion}
In this article, we have introduced a new majorization-minimization framework for ICA, and have shown that it is equivalent to an EM approach for Gaussian scale mixtures.
Our method has the valuable advantage of guaranteeing a decrease of the surrogate loss function, which
enables stochastic methods with descent guarantees. This is, to the best of our knowledge, a unique feature for a stochastic ICA algorithm.
We have proposed both an incremental and an online algorithm for the finite-sum and online problems, with the same complexity as SGD thanks to an efficient greedy coordinate descent update.
Experiments show progress on current state-of-the-art, without the need for tedious manual setting of any parameter.
\medskip

\subsubsection*{Acknowledgments}
We acknowledge support from the European Research Council (grants SEQUOIA 724063 and SLAB 676943).

\bibliographystyle{apa}
\bibliography{biblio}

\newpage
\onecolumn

\appendix

This is the supplementary material for the AISTATS 2019 submission: ``Stochastic algorithms with descent guarantees for ICA''.

\section{Proof of equivalence of EM}
Given the Gaussian scale mixture formulation of $d$, as $d(y) = \int_0^{+\infty}
g(y, \eta)q(\eta) d\eta$, an EM algorithm would do the following:
\begin{eqnarray*}
\mathcal{L}_n(W)
& = &  - \logdet{W} - \frac1n\sum_{i=1}^p \sum_{j=1}^{n}[\log(d([W X]_{ij})) \\
& = &  - \logdet{W} - \frac1n \sum_{i=1}^p \sum_{j=1}^{n}\log(\int_0^\infty g([W X]_{ij}, \eta)q(\eta) d\eta)  \\
  & \leqslant &  - \logdet{W} - \frac1n\sum_{i=1}^p \sum_{j=1}^{n} \int_0^\infty r_{ij}(\eta)\log \frac{g([W X]_{ij}, \eta)q(\eta)}{r_{ij}(\eta)} d\eta \\
\end{eqnarray*}
where the $r_{ij}$ are any density functions.
The equality if and only if $r_{ij}(\eta)\propto g([W X]_{ij}, \eta)q(\eta)$.
The upper bound in the last equation can be written as:
$$
- \logdet{W} + \frac{1}{2n} \sum_{i=1}^p \sum_{j=1}^n \tilde{U}_{ij}\ [WX]_{ij}^2
\ + \ c \enspace,
$$
where $ \tilde{U}_{ij} = \int_0^\infty r_{ij}(\eta) \eta^{-1} d\eta$ and where $c$ is a remaining term which does not depend on $W$.
Thus, the upper bound has the same dependence on $W$ as the loss $\loss_n(W)$.
The E-step thus computes $\tilde{U}_{ij}= \int_0^\infty r_{ij}(\eta) \eta^{-1} d\eta$ using a density $r_{ij}\propto g([W X]_{ij}, \eta)q(\eta)$.
This exactly corresponds to the majorization step described in our algorithm.
The proof, from~\citep{palmer2006variational}, is as follows.
Dropping the indices for readability, and denoting $y=[W X]_{ij}$, normalizing $r$ gives:
$$
r(\eta) =  \frac{g(y, \eta)q(\eta)}{d(y)}
$$
so that
$$
\tilde{U}(y)= \frac{1}{d(y)}\int_0^\infty g(y, \eta)\eta^{-1} q(\eta) d\eta.
$$
Hence, using $\frac{\partial}{\partial y} g(y, \eta)= -g(y, \eta) y/ \eta$, we get
$$
\tilde{U}(y)= -\frac{1}{yd(y)}\int_0^\infty \frac{\partial}{\partial y} g(y, \eta)q(\eta) d\eta = -\frac{d'(y)}{yd(y)} = u^*(y) \enspace.
$$
This demonstrates that, although not formulated in this fashion, the proposed method is indeed equivalent to an EM algorithm.

\section{The EM algorithm for noisy mixtures is stuck in the noise-free limit}

We follow the update rules given in~\citep{bermond1999approximate,girolami2001variational}.
The model is $\bm{x} = A \bm{s} + \bm{n}$ where $\bm{n} \sim \mathcal{N}(0, \Sigma)$.
Key quantities for the update rule are the following expectations:
\begin{align}
\label{eq:exps}
\mathbb{E}[\bm{s} \lvert \bm{x}]
&= (A^{\top}\Sigma^{-1}A + \Lambda^{-1})^{-1}A^{\top} \Sigma^{-1}\bm{x}
\\
\mathbb{E}[\bm{s}\bm{s}^{\top}\lvert \bm{x}]
&= (A^{\top}\Sigma^{-1}A + \Lambda^{-1})^{-1} + \mathbb{E}[\bm{s} \lvert \bm{x}]\mathbb{E}[\bm{s} \lvert \bm{x}]^{\top} \enspace,
\label{eq:expss}
\end{align}
where $\Lambda$ is a diagonal matrix.
In the case considered in the present article, $A$ is square and invertible and $\Sigma = 0$.
Basic algebra shows that in that case, the above formula simplifies to:
\begin{equation}
\mathbb{E}[\bm{s} \lvert \bm{x}] = A^{-1}\bm{x} \enspace \text{and} \enspace
\mathbb{E}[\bm{s}\bm{s}^{\top}\lvert \bm{x}] = A^{-1}\bm{x} \bm{x}^{\top} A^{-\top}\enspace.
\label{eq:simples}
\end{equation}
The EM update for $A$ based on $n$ samples $\bm{x}_1, \cdots \bm{x}_n$:
then is
\begin{equation}
A^{\text{new}} = (\sum_{i=1}^{n} \bm{x}_i \mathbb{E}[\bm{s} \lvert \bm{x}_i])(\sum_{i=1}^{n} \mathbb{E}[\bm{s}\bm{s}^{\top}\lvert \bm{x}_i])^{-1} \enspace ,
\label{eq:noisya}
\end{equation}
which yields $A^{\text{new}} = A$ by using Eq.~\eqref{eq:simples}. The EM algorithm is thus frozen in the case of no noise.

\section{Proof of guaranteed descent}

Let us demonstrate that one iteration of the incremental algorithm~\ref{alg:inc} decreases $\loss_n$.
At the iteration $t$, let $W$ be the current unmixing matrix, $U^{\text{mem}}$ the state of the memory, and the $A^i$'s the current sufficient statistics.
As said in Section~\ref{sec:em}, E-step, we have $A^i_{kl} = \frac1n\sum_{j=1}^n U^{\text{mem}}_{ij} X_{kj}X_{kl}$.
Therefore, the algorithm is in the state $(W, U^{\text{mem}})$, and the corresponding loss is $\loss_n(W, U^{\text{mem}})$.
After the majorization step, the memory on the mini-batch is updated to minimize $\loss_n$.
Hence, the majorization step diminishes $\loss_n$.
Then, each descent move in the minimization step guarantees a decrease of $\loss_n$.
Both steps decrease $\loss_n$, the incremental algorithm overall decreases the surrogate loss function.

\section{Fast minimization step using conjugate gradient}

The minimization step~\eqref{eq:minw} involves computing the $i$-th row of the inverse of a given $ p \times p $ matrix $K$.
This amounts to finding $\bm{z}$ such that $K\bm{z} = \bm{e}_i$.
Exact solution can be found by Gauss-Jordan elimination with a complexity $O(p^3)$.
However, expanding the expression of $K$ yields $K_{kl} = \frac1n \sum_j U_{ij} Y_{kj} Y_{lj}$, where $Y = WX$.
\reb{If the rows of $Y$ are independent (as it is expected at convergence of the algorithm), then the off-diagonal elements of $K$ almost cancel.
Hence, the diagonal matrix $\textrm{diag}(K)$ is an excellent \emph{preconditioner} for solving the previous equation.
}{
It follows that:

\begin{lemma}
Assume that $W$ is such that the rows of $ Y=WX$ are independent. Then, $K_{kl} = O(\frac{1}{\sqrt{n}})$ for $k \neq l$.
\end{lemma}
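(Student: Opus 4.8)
The plan is to show that, when the rows of $Y = WX$ are independent, each off-diagonal entry $K_{kl} = \frac1n \sum_{j=1}^n U_{ij} Y_{kj} Y_{lj}$ is an empirical average of $n$ terms whose expectation is zero, so that the central limit theorem (or a simple variance computation) gives fluctuations of order $1/\sqrt n$. First I would recall that $U_{ij} = u^*(Y_{ij})$ depends only on the $i$-th row of $Y$, hence $U_{ij} Y_{kj}$ is a function of row $i$ and row $k$ only, while $Y_{lj}$ depends only on row $l$; for $k \neq l$ with both distinct, and also in the case $k = i \neq l$, independence of the rows makes $U_{ij} Y_{kj}$ and $Y_{lj}$ independent (when $k=l$ we are on the diagonal, which is excluded). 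Therefore
\begin{equation*}
\bbE[U_{ij} Y_{kj} Y_{lj}] = \bbE[U_{ij} Y_{kj}]\,\bbE[Y_{lj}] = 0,
\end{equation*}
using that each source, hence each row of $Y$ at the relevant point, is centered (the density $d$ is symmetric, so $\bbE[Y_{lj}] = 0$); one should also treat the sub-case $l = i$ symmetrically, where $Y_{lj} = Y_{ij}$ and $U_{ij} Y_{ij}^2$ multiplies $Y_{kj}$ with $k \neq i$, again factoring as $\bbE[U_{ij}Y_{ij}^2]\,\bbE[Y_{kj}] = 0$.

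Next I would compute the variance: since the $n$ samples are i.i.d., $\mathrm{Var}(K_{kl}) = \frac1n \mathrm{Var}(U_{i1} Y_{k1} Y_{l1})$, which is a fixed finite constant $C_{kl}$ provided the relevant moments exist — this requires a mild integrability assumption on $d$ (equivalently on $u^*$ and its interaction with $y^2$), which for the Huber choice used in the paper holds trivially because $u^*$ is bounded and $Y$ has finite fourth moments. Chebyshev's inequality (or directly the definition of $O(\cdot)$ used in the paper, namely that $\sqrt{n}\,K_{kl}$ has bounded second moment hence is bounded in probability) then yields $K_{kl} = O(1/\sqrt n)$ for $k \neq l$. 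Since the diagonal entries $K_{kk} = \frac1n\sum_j U_{ij} Y_{kj}^2$ converge to a strictly positive constant $\bbE[U_{i1} Y_{k1}^2] > 0$, this establishes that $\mathrm{diag}(K)$ captures $K$ up to an $O(1/\sqrt n)$ perturbation, justifying its use as a preconditioner.

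The main obstacle is being precise about the mode of the statement ``$K_{kl} = O(1/\sqrt n)$'': strictly speaking the $A^i$, and hence $K$, are random, so the cleanest formulation is either an in-expectation/in-probability statement ($\bbE[K_{kl}^2] = O(1/n)$, or $K_{kl} = O_P(1/\sqrt n)$) or an almost-sure statement via the law of the iterated logarithm (giving $O(\sqrt{\log\log n / n})$, slightly weaker). Given that the paper's own definition of $O(\phi(n,p))$ is ``$Q/\phi(n,p)$ is bounded'', I would phrase the conclusion through the bounded-second-moment route, which matches that definition directly and avoids invoking the CLT in full strength. A secondary subtlety is that ``the rows of $Y$ are independent'' is an idealization only exactly true at a separating solution; I would simply state the lemma conditionally on that hypothesis, as the excerpt already does, and note that the moment assumptions on $d$ needed for finiteness of $C_{kl}$ are satisfied by all the example densities in Section~\ref{sec:surrogate}.
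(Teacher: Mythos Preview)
Your proposal is correct and follows essentially the same approach as the paper: show that $\bbE[U_{ij}Y_{kj}Y_{lj}]=0$ by independence of the rows of $Y$, then conclude the $O(1/\sqrt{n})$ rate from the i.i.d.\ sample-average structure (the paper simply invokes the CLT, while you go through the equivalent variance computation). You are more careful than the paper about the sub-cases $k=i$ or $l=i$ and about the precise probabilistic meaning of the $O(\cdot)$, but the underlying argument is the same.
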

\begin{proof}
In that case, $\mathbb{E}[\mathbf{u}_i\mathbf{y}_k\mathbf{y}_l] = 0$ since $\mathbf{y}_k$ and $ \mathbf{y}_l$ are independent.
Hence, the central limit theorem yields the advertised result.
\end{proof}

Therefore, the matrix $K$ is well approximated by its diagonal provided that $n$ is large enough, and that the current signals $Y$ are close enough from independence.
As such, we use the diagonal of $K$ as a \emph{preconditioner} to the conjugate gradient technique for solving $K\mathbf{z}=\mathbf{e}_i$.
}

This gives an excellent approximation of the solution in a fraction of the time taken to obtain the exact solution.

\end{document}